\documentclass[conference]{IEEEtran}
\IEEEoverridecommandlockouts

\usepackage{cite}
\usepackage{amsmath,amssymb,amsfonts}
\usepackage{amsthm}
\usepackage{algorithm}
\usepackage{algpseudocode}

\newtheorem{theorem}{Theorem}
\usepackage{graphicx}
\usepackage{textcomp}
\usepackage{xcolor}
\usepackage{booktabs}
\usepackage{array}

\def\BibTeX{{\rm B\kern-.05em{\sc i\kern-.025em b}\kern-.08em
    T\kern-.1667em\lower.7ex\hbox{E}\kern-.125emX}}

\begin{document}

\title{Enhancing SignSGD: Small-Batch Convergence Analysis and a Hybrid Switching Strategy}

\author{\IEEEauthorblockN{Haoran Chen, Wentao Wang}
\IEEEauthorblockA{\textit{École polytechnique} \\
Palaiseau, France \\
haoran.chen@polytechnique.edu, wentao.wang@polytechnique.edu}
}

\maketitle

\begin{abstract}
SignSGD compresses each stochastic gradient coordinate to a single bit, offering substantial memory and communication savings, but its 1-bit quantization removes magnitude information and is known to leave a generalization gap relative to well-tuned SGD. We revisit SignSGD from a 1-bit quantization and dithering perspective and contribute three improvements. First, we derive a small-batch convergence rate for SignSGD under unimodal symmetric gradient noise using a signal-to-noise weighted stationarity measure, removing the large-batch assumption of prior analyses. Second, we inject annealed Gaussian noise before the sign operator, which acts as a classical dithering mechanism and probabilistically restores magnitude information lost to hard thresholding. Third, we adapt the SWATS strategy to sign-based updates with a projection-based learning-rate calibration that smoothly transitions from SignSGD to SGD. Single-worker experiments on ResNet-18 isolate optimizer effects from communication aspects: pre-sign dithering surpasses Adam on CIFAR-100, and the calibrated switch reaches 92.18\% test accuracy on CIFAR-10, outperforming both pure SGD (91.38\%) and pure SignSGD with momentum (90.82\%).
\end{abstract}

\begin{IEEEkeywords}
Stochastic optimization, 1-bit quantization, dithering, sign-SGD, communication-efficient training, deep learning.
\end{IEEEkeywords}

\section{Introduction}
Training deep neural networks at scale stresses both compute and communication budgets, motivating compressed gradient updates for memory-constrained and distributed settings. SignSGD~\cite{bernstein2018signsgd} replaces each stochastic gradient coordinate $\tilde{g}_i$ by its sign, yielding a 1-bit update with strong communication-reduction properties when combined with majority-vote aggregation across workers. From a signal-processing viewpoint, $\mathrm{sign}(\tilde{g}_i)$ is a coarse 1-bit quantizer applied to a noisy signal, and many of the algorithm's behaviors---robustness to sparse gradient noise, biased fixed points, and sensitivity to small-magnitude coordinates---are natural consequences of this quantization.

Despite its appeal, sign-based optimizers exhibit a persistent generalization gap relative to well-tuned SGD on vision benchmarks~\cite{bernstein2018signsgd}, mirroring the gap observed for adaptive methods more broadly~\cite{keskar2017swats}, and now also visible in modern variants such as Lion~\cite{chen2023lion}, which is in essence a sign-of-momentum optimizer. The gap is commonly attributed to two factors: (i) hard thresholding discards magnitude information that would otherwise modulate the step direction near small or noisy coordinates, and (ii) the uniform $\pm\delta$ step can prevent convergence to flat minima associated with good generalization~\cite{karimireddy2019error}.

\textbf{Contributions.} Through a 1-bit-quantization-and-dithering lens, we make three contributions:
\begin{itemize}
\item \textit{Small-batch convergence analysis} (Sec.~\ref{sec:convergence}): we derive a signal-to-noise (SNR) weighted stationarity rate for SignSGD with constant batch size under unimodal symmetric noise, complementing the large-batch rate of~\cite{bernstein2018signsgd}.
\item \textit{Pre-sign dithering} (Sec.~\ref{sec:dithering}): we add annealed Gaussian noise before the sign operator, the optimizer-side analogue of classical subtractive dithering in 1-bit A/D conversion~\cite{schuchman1964dither, gray1993dithered}, and show it consistently improves test accuracy on CIFAR-100, surpassing Adam.
\item \textit{Calibrated switch to SGD} (Sec.~\ref{sec:switching}): we adapt SWATS~\cite{keskar2017swats} to sign-based updates with a projection-based learning-rate calibration that matches the effective step size of SignSGD with momentum at the switching iterate, avoiding the magnitude mismatch of a naive switch.
\end{itemize}

\textbf{Scope.} All experiments use a single worker and are intended to isolate the optimization-side effects of 1-bit quantization, dithering, and switching from communication-side considerations. The dithering and switching mechanisms are compatible with majority-vote aggregation, but a multi-worker evaluation is left to future work.

\section{Background and Related Work}

\textbf{SignSGD and variants.} Bernstein et al.~\cite{bernstein2018signsgd} established that SignSGD attains an $\mathcal{O}(1/\sqrt{N})$ rate on the average $\ell_1$-gradient norm under coordinate-wise smoothness and bounded variance, and that majority-vote aggregation across $M$ workers reduces the variance term by $\sqrt{M}$. Subsequent work showed that vanilla SignSGD can fail to converge under non-symmetric noise and proposed error-feedback variants~\cite{karimireddy2019error}; communication-compressed Adam variants such as 1-bit Adam~\cite{tang2021onebitadam} extend the same idea to adaptive optimizers. Lion~\cite{chen2023lion}, recently popular for large-scale pretraining, is in essence a sign-of-momentum optimizer and inherits the geometry studied here. An early predecessor is 1-bit SGD applied to speech DNN training~\cite{seide2014onebit}.

\textbf{Quantized and compressed SGD.} Beyond 1-bit, QSGD~\cite{alistarh2017qsgd}, TernGrad~\cite{wen2017terngrad}, and Top-K sparsification trade compression rate against unbiasedness. SignSGD sits at the extreme compression end and trades unbiasedness for simplicity.

\textbf{Gradient noise and dithering.} Adding Gaussian noise to gradients improves training of very deep networks~\cite{neelakantan2015adding}; in signal processing, dithering before a coarse quantizer is a classical mechanism to decorrelate quantization error from the input and to encode sub-quantum-step information probabilistically~\cite{schuchman1964dither, gray1993dithered}. Our pre-sign noise injection makes this connection explicit on the optimizer side.

\textbf{Adaptive-to-SGD switching.} SWATS~\cite{keskar2017swats} starts with Adam and switches to SGD with a learning rate determined by a projection condition. We transplant this idea to sign-based updates with a momentum-aware projection.

\section{Small-Batch Convergence Under Unimodal Symmetric Noise}
\label{sec:convergence}

We adopt the standard assumptions of~\cite{bernstein2018signsgd}: $f$ is lower bounded by $f^\star$, has a coordinate-wise Lipschitz constant vector $\vec{L}$ with $L_i$ the Lipschitz constant of $\partial_i f$, and the unbiased single-sample stochastic gradient oracle has coordinate variance bounded by $\sigma_i^2$. We additionally assume that the per-coordinate noise is unimodal and symmetric, so that the sign-failure-probability bound of~\cite{bernstein2018signsgd}---a consequence of Gauss's inequality~\cite{pukelsheim1994three}---applies coordinate-wise. The mini-batch estimator $\tilde{g}_k$ at iteration $k$ is the average of $n$ independent samples of the oracle, so that $\mathrm{Var}(\tilde{g}_{k,i} \mid x_k) \leq \sigma_i^2/n$.

Let $\tilde{L}_1 := \sum_i L_i$, $\tilde{\sigma}_1 := \sum_i \sigma_i$, $s_{k,i} := \sqrt{\mathrm{Var}(\tilde{g}_{k,i} \mid x_k)}$, and $S_{k,i} := |g_{k,i}|/s_{k,i}$ denote the per-coordinate signal-to-noise ratio at iteration $k$. We define the SNR-weighted stationarity measure
\begin{equation}
\Phi_k := \sum_{i=1}^d |g_{k,i}| \min\!\left(1, S_{k,i}\right) = \sum_{i=1}^d \min\!\left(|g_{k,i}|, \frac{|g_{k,i}|^2}{s_{k,i}}\right).
\label{eq:phi}
\end{equation}
$\Phi_k$ smoothly interpolates between $\|g_k\|_1$ on high-SNR coordinates and a quadratic-in-$|g_{k,i}|$ measure on low-SNR coordinates, reflecting the diminishing usefulness of the sign bit when noise dominates the signal. This is a more refined stationarity criterion than $\|g_k\|_1$: it weights each coordinate by its informativeness rather than its raw magnitude.

\begin{theorem}[Small-batch SNR-weighted rate of SignSGD]
\label{thm:rate}
Run SignSGD for $K$ iterations with constant mini-batch size $n_k \equiv n$ and constant stepsize $\delta_k \equiv 1/\sqrt{\tilde{L}_1 K}$. Then
\begin{align}
\mathbb{E}\!\left[\frac{1}{K}\sum_{k=0}^{K-1} \Phi_k\right] &\leq \frac{3\sqrt{\tilde{L}_1}}{\sqrt{K}}\!\left(f_0 - f^\star + \tfrac{1}{2}\right), \label{eq:rate_phi}\\
\mathbb{E}\!\left[\frac{1}{K}\sum_{k=0}^{K-1} \|g_k\|_1\right] &\leq \frac{3\sqrt{\tilde{L}_1}}{\sqrt{K}}\!\left(f_0 - f^\star + \tfrac{1}{2}\right) + \frac{\tilde{\sigma}_1}{\sqrt{n}}. \label{eq:rate_l1}
\end{align}
\end{theorem}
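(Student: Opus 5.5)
We follow the one-step descent argument of~\cite{bernstein2018signsgd}, but lower-bound the expected progress per coordinate by the SNR-weighted quantity $\Phi_k$ rather than by $\|g_k\|_1$ minus a variance penalty.

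\textbf{Step 1: one-step descent.} By coordinate-wise smoothness, with $x_{k+1}=x_k-\delta\,\mathrm{sign}(\tilde g_k)$,
\begin{align*}
f_{k+1}-f_k &\le -\delta\sum_i g_{k,i}\,\mathrm{sign}(\tilde g_{k,i}) + \tfrac{\delta^2}{2}\tilde L_1 \\
&= -\delta\|g_k\|_1 + 2\delta\sum_i |g_{k,i}|\,\mathbf 1[\mathrm{sign}(\tilde g_{k,i})\neq \mathrm{sign}(g_{k,i})] + \tfrac{\delta^2}{2}\tilde L_1 .
\end{align*}
Taking the conditional expectation given $x_k$, the per-coordinate expected progress is $|g_{k,i}|(1-2p_{k,i})$, where $p_{k,i}$ is the sign-failure probability.

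\textbf{Step 2: the key per-coordinate bound.} We aim to show
\[
|g_{k,i}|(1-2p_{k,i}) \ge \tfrac13 \min\bigl(|g_{k,i}|,\,|g_{k,i}|S_{k,i}\bigr).
\]
This is the main obstacle. We will use the unimodal-symmetric sign-failure bound that the paper invokes via Gauss's inequality: $p\le \tfrac{2}{9S^2}$ when $S>2/\sqrt3$, and $p\le \tfrac12-\tfrac{S}{2\sqrt3}$ otherwise. Both regimes follow from Gauss's inequality applied to the unimodal symmetric distribution of $\tilde g_{k,i}$ centered at $g_{k,i}$.
\begin{itemize}
\item If $S\le 2/\sqrt3$, then $1-2p\ge S/\sqrt3 \ge \tfrac13\min(1,S)$.
\item If $S>2/\sqrt3$, then $1-2p\ge 1-\tfrac{4}{9S^2}\ge 1-\tfrac13=\tfrac23\ge\tfrac13$.
\end{itemize}
Hence $\mathbb E[f_{k+1}-f_k\mid x_k]\le -\tfrac{\delta}{3}\Phi_k+\tfrac{\delta^2}{2}\tilde L_1$. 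A possible subtlety is that the cited bound may be stated with $\sigma$ as an upper bound on the standard deviation; we use the exact $s_{k,i}$, which is exactly why $\Phi_k$ is defined with $s_{k,i}$.

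\textbf{Step 3: telescoping.} Summing over $k$, taking total expectation, and using $f_K\ge f^\star$ gives
\[
\tfrac{\delta}{3}\,\mathbb E\sum_k\Phi_k \le f_0-f^\star+\tfrac{K\delta^2}{2}\tilde L_1 .
\]
With $\delta=1/\sqrt{\tilde L_1 K}$, we have $K\delta^2\tilde L_1=1$. Dividing by $K\delta/3 = \sqrt K/(3\sqrt{\tilde L_1})$ yields~\eqref{eq:rate_phi}.

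\textbf{Step 4: the $\ell_1$ bound.} Since $\min(1,S)\ge 1$ when $S\ge1$, and $|g|-|g|S<0$ only in the other case, we have
\[
|g_{k,i}|-|g_{k,i}|\min(1,S_{k,i})\le |g_{k,i}|\,\mathbf 1[S_{k,i}<1]\le s_{k,i}\le \sigma_i/\sqrt n .
\]
Summing over coordinates gives $\|g_k\|_1\le \Phi_k+\tilde\sigma_1/\sqrt n$. Averaging over $k$ and combining with~\eqref{eq:rate_phi} gives~\eqref{eq:rate_l1}.
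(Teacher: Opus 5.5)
Your proof is correct and follows the paper's argument essentially step for step: the smoothness descent inequality, the per-coordinate bound $1-2p_{k,i}\ge\frac13\min(1,S_{k,i})$ from the Gauss-inequality sign-failure lemma, telescoping with $K\delta^2\tilde L_1=1$, and the case split behind $|g|\le\min(|g|,g^2/s)+s$ for the $\tilde\sigma_1/\sqrt n$ floor. The differences are minor: you use Bernstein et al.'s original split point $S=2/\sqrt3$ rather than the paper's relaxed split at $\sqrt{2/3}$, and both give the same constant $\frac13$. Also, in Step 4 the phrase ``$|g|-|g|S<0$ only in the other case'' should read ``$>0$ only when $S<1$,'' although your displayed inequality is correct.
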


\begin{proof}
By coordinate-wise smoothness with $y = x_{k+1}$, $x = x_k$,
\begin{equation}
f_{k+1} - f_k \leq -\delta_k g_k^\top \mathrm{sign}(\tilde{g}_k) + \frac{\delta_k^2}{2} \tilde{L}_1.
\label{eq:smooth}
\end{equation}
Let $p_{k,i} := \mathbb{P}[\mathrm{sign}(\tilde{g}_{k,i}) \neq \mathrm{sign}(g_{k,i}) \mid x_k]$, so that
\begin{equation}
\mathbb{E}[g_{k,i}\, \mathrm{sign}(\tilde{g}_{k,i}) \mid x_k] = |g_{k,i}|(1 - 2p_{k,i}).
\label{eq:sign_exp}
\end{equation}
The Gauss-inequality bound for unimodal symmetric noise~\cite{bernstein2018signsgd, pukelsheim1994three} gives, after a slight relaxation of the split point at $S = \sqrt{2/3}$,
\begin{equation}
p_{k,i} \leq
\begin{cases}
\dfrac{2}{9 S_{k,i}^2}, & S_{k,i} > \sqrt{2/3},\\[4pt]
\dfrac{1}{2} - \dfrac{S_{k,i}}{2\sqrt{3}}, & \text{otherwise},
\end{cases}
\label{eq:gauss}
\end{equation}
which yields $1 - 2p_{k,i} \geq \frac{1}{3}\min(1, S_{k,i})$ in both regimes. Summing over coordinates and combining with~\eqref{eq:sign_exp} gives
\begin{equation}
\mathbb{E}[g_k^\top \mathrm{sign}(\tilde{g}_k) \mid x_k] \geq \frac{1}{3}\Phi_k.
\label{eq:phi_lb}
\end{equation}
Substituting~\eqref{eq:phi_lb} into~\eqref{eq:smooth}, taking total expectation and telescoping over $k = 0, \ldots, K-1$,
\begin{equation}
f_0 - f^\star \geq \frac{1}{3}\sum_k \delta_k \mathbb{E}[\Phi_k] - \frac{\tilde{L}_1}{2}\sum_k \delta_k^2.
\label{eq:telescope}
\end{equation}
With $\delta_k \equiv 1/\sqrt{\tilde{L}_1 K}$, rearranging yields~\eqref{eq:rate_phi}. For~\eqref{eq:rate_l1}, the elementary inequality $|a| \leq \min(|a|, a^2/s) + s$ valid for $a \in \mathbb{R}$, $s > 0$, applied with $s_{k,i} \leq \sigma_i/\sqrt{n}$ gives $\|g_k\|_1 \leq \Phi_k + \tilde{\sigma}_1/\sqrt{n}$. Averaging over $k$ and combining with~\eqref{eq:rate_phi} completes the proof.
\end{proof}

\textbf{Discussion.} Compared to Theorem 1 of~\cite{bernstein2018signsgd}, which requires $n_k \equiv K$ (a large-batch regime where increasing the optimization horizon implicitly raises the batch size), Theorem~\ref{thm:rate} holds for any constant $n$, at the price of a non-vanishing $\tilde{\sigma}_1/\sqrt{n}$ floor in~\eqref{eq:rate_l1}. This floor is an honest reflection of the cost of 1-bit quantization at small batch sizes: when the per-coordinate SNR is low, the sign bit carries little information about the gradient direction, and no amount of iteration can drive $\|g_k\|_1$ below the noise level encoded in $\tilde{\sigma}_1/\sqrt{n}$. The SNR-weighted measure $\Phi_k$ in~\eqref{eq:rate_phi}, by contrast, vanishes at the standard $\mathcal{O}(1/\sqrt{K})$ rate, providing a clean stationarity guarantee in the small-batch regime.

\section{Pre-Quantization Dithering}
\label{sec:dithering}

\textbf{Mechanism.} The sign operator is a coarse 1-bit quantizer; its output ignores all magnitude information in its argument. Subtractive and non-subtractive dithering---adding controlled noise to a signal before quantization---is a textbook technique to decorrelate quantization error from the input and to encode sub-quantum-step information probabilistically~\cite{schuchman1964dither, gray1993dithered}. We apply this idea to SignSGD with momentum (SignSGD-M), summarized in Algorithm~\ref{alg:dither}.

\begin{algorithm}[t]
\caption{Dithered SignSGD-M (pre-sign noise)}
\label{alg:dither}
\begin{algorithmic}[1]
\Require learning rate $\delta$, momentum $\beta \in (0,1)$, dither schedule $\sigma_k^2 = \alpha(1+k)^{-\gamma}$
\State Initialize $x_0$, $m_0 \gets 0$
\For{$k = 0, 1, \ldots, K-1$}
    \State Sample mini-batch and compute $\tilde{g}_k$
    \State $m_{k+1} \gets \beta m_k + (1-\beta)\tilde{g}_k$
    \State Sample dither $\xi_k \sim \mathcal{N}(0, \sigma_k^2 I)$
    \State $x_{k+1} \gets x_k - \delta\, \mathrm{sign}(m_{k+1} + \xi_k)$
\EndFor
\end{algorithmic}
\end{algorithm}

We anneal the dither variance following~\cite{neelakantan2015adding}:
\begin{equation}
\sigma_k^2 = \alpha (1+k)^{-\gamma}, \qquad \gamma = 0.55.
\label{eq:anneal}
\end{equation}
For comparison we also evaluate \textit{post-sign} injection,
\begin{equation}
x_{k+1} = x_k - \delta\!\left(\mathrm{sign}(m_{k+1}) + \xi_k\right),
\label{eq:postsign}
\end{equation}
which perturbs the already-normalized step and is closer in spirit to exploration noise on a constant-magnitude update.

\textbf{Effect on the sign-failure probability.} Consider a single coordinate with momentum signal $m_i > 0$ and pre-sign Gaussian noise $\xi_i \sim \mathcal{N}(0, \sigma_k^2)$. The probability of correct sign is $\Phi(m_i/\sigma_k)$, where $\Phi$ is the standard normal CDF. As $\sigma_k \to 0$ this recovers the deterministic sign of $m_i$; for moderate $\sigma_k$, coordinates with $|m_i| \lesssim \sigma_k$ produce dithered (probabilistic) signs whose expectation is $2\Phi(m_i/\sigma_k) - 1 \approx m_i\sqrt{2/\pi}/\sigma_k$ to first order. This is, in spirit, the optimizer-side analogue of classical dithering theory~\cite{schuchman1964dither, gray1993dithered}: the average update on small-magnitude coordinates becomes proportional to $m_i$ rather than its sign, partially restoring the magnitude information lost to quantization. The annealing schedule~\eqref{eq:anneal} ensures that this effect is strongest early in training---when the gradient signal is informative but coarsely quantized---and decays so as not to interfere with late-stage convergence.

\begin{figure}[t]
\centerline{\includegraphics[width=\columnwidth]{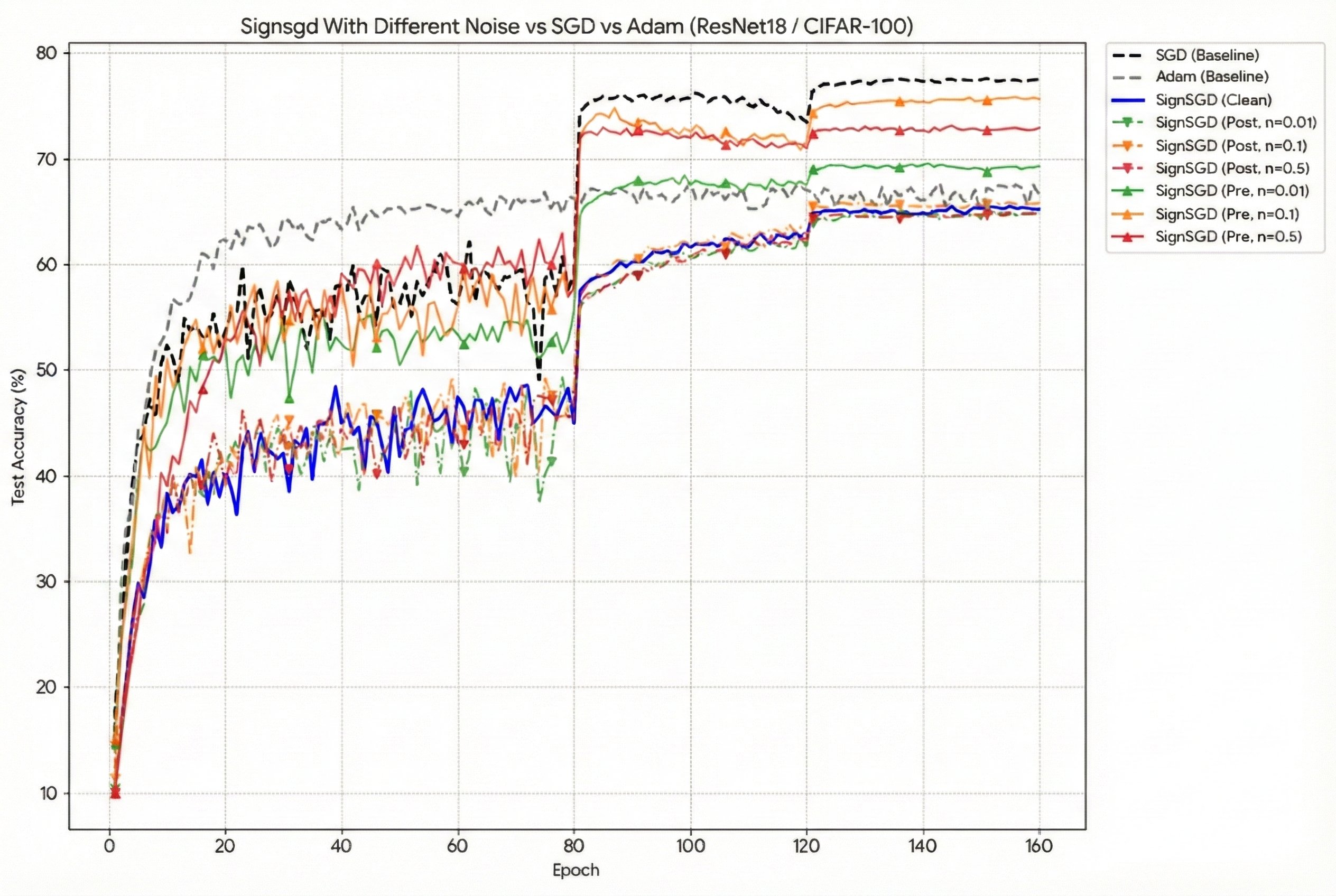}}
\caption{Test accuracy on CIFAR-100 (ResNet-18) for SignSGD-M with pre- and post-sign Gaussian dithering, against SGD and Adam baselines. Pre-sign injection with $\alpha = 0.1$ is the best sign-based variant, surpassing Adam and approaching SGD; post-sign injection is unstable. Stepwise jumps reflect a learning-rate schedule shared across all methods.}
\label{fig:noise}
\end{figure}

\textbf{Empirical results.} We evaluate on CIFAR-100 with ResNet-18~\cite{he2016resnet} (Fig.~\ref{fig:noise}). Pre-sign dithering with $\alpha = 0.1$ reaches roughly 76\% test accuracy, the best among all sign-based variants, surpassing Adam~\cite{kingma2015adam} ($\sim$67\%) and approaching the SGD baseline ($\sim$77\%). Larger $\alpha = 0.5$ also helps but plateaus lower, while $\alpha = 0.01$ provides marginal benefit, indicating that the dither variance must be matched to the typical magnitude of the momentum signal. Post-sign injection, which perturbs the already-normalized step, fails to provide a meaningful gain: it shows at most a marginal improvement over clean SignSGD-M (within 1\%) and degrades for larger $\alpha$, consistent with random-walk behavior on a fixed-magnitude update.

\section{Calibrated Switching to SGD}
\label{sec:switching}

\textbf{Motivation.} A naive hand-off from SignSGD-M to SGD at a fixed epoch is unstable: the normalized $\pm\delta$ updates of SignSGD-M and the gradient-proportional updates of SGD are on different magnitude scales, and reusing either learning rate produces stagnation or divergence. Following SWATS~\cite{keskar2017swats}, we calibrate the SGD learning rate at the switching iterate so that the SGD update matches the projection of the sign-based update onto the gradient direction.

\textbf{Projection rule.} Let $\delta$ be the SignSGD-M learning rate, $m_{k+1} = \beta m_k + (1-\beta)\tilde{g}_k$ the just-updated momentum at iteration $k$, and $\tilde{g}_k$ the current stochastic mini-batch gradient. The SignSGD-M step actually applied at iteration $k$ is $-\delta\,\mathrm{sign}(m_{k+1})$. We seek a scalar $\lambda_k \geq 0$ such that the SGD update $-\lambda_k \tilde{g}_k$ equals the projection of $-\delta\,\mathrm{sign}(m_{k+1})$ onto $\tilde{g}_k$:
\begin{equation}
\lambda_k \|\tilde{g}_k\|_2^2 = \delta\, \langle \mathrm{sign}(m_{k+1}), \tilde{g}_k\rangle.
\label{eq:proj}
\end{equation}
Enforcing a non-negative magnitude for stability,
\begin{equation}
\lambda_k = \delta\, \frac{|\langle \mathrm{sign}(m_{k+1}), \tilde{g}_k\rangle|}{\|\tilde{g}_k\|_2^2 + \epsilon},
\label{eq:lambda}
\end{equation}
with small $\epsilon > 0$ for numerical stability. During the SignSGD-M phase we maintain an exponential moving average (EMA) of $\lambda_k$ to reduce stochastic variance, and at a fixed switching epoch $T_{\mathrm{switch}}$ we transition to SGD with the EMA stepsize. We use a fixed $T_{\mathrm{switch}}$ to isolate the effect of the projection mechanism from the automatic-trigger heuristic of~\cite{keskar2017swats}. The full procedure is given in Algorithm~\ref{alg:hybrid}.

\begin{algorithm}[t]
\caption{Hybrid SignSGD-M $\to$ SGD with projection}
\label{alg:hybrid}
\begin{algorithmic}[1]
\Require SignSGD-M lr $\delta$, momentum $\beta \in (0,1)$, EMA decay $\eta$, switch epoch $T_{\mathrm{switch}}$
\State Initialize $x_0$, $m_0 \gets 0$, $\bar{\lambda} \gets 0$
\For{$k = 0, 1, \ldots, K-1$}
    \State Sample mini-batch and compute $\tilde{g}_k$
    \If{epoch $< T_{\mathrm{switch}}$}
        \State $m_{k+1} \gets \beta m_k + (1-\beta)\tilde{g}_k$
        \State $\lambda_k \gets \delta\, |\langle \mathrm{sign}(m_{k+1}), \tilde{g}_k\rangle|/(\|\tilde{g}_k\|_2^2 + \epsilon)$
        \State $\bar{\lambda} \gets \eta \bar{\lambda} + (1-\eta)\lambda_k$
        \State $x_{k+1} \gets x_k - \delta\, \mathrm{sign}(m_{k+1})$
    \Else
        \State $x_{k+1} \gets x_k - \bar{\lambda}\, \tilde{g}_k$ \Comment{SGD with calibrated lr}
    \EndIf
\EndFor
\end{algorithmic}
\end{algorithm}

\begin{figure}[t]
\centerline{\includegraphics[width=\columnwidth]{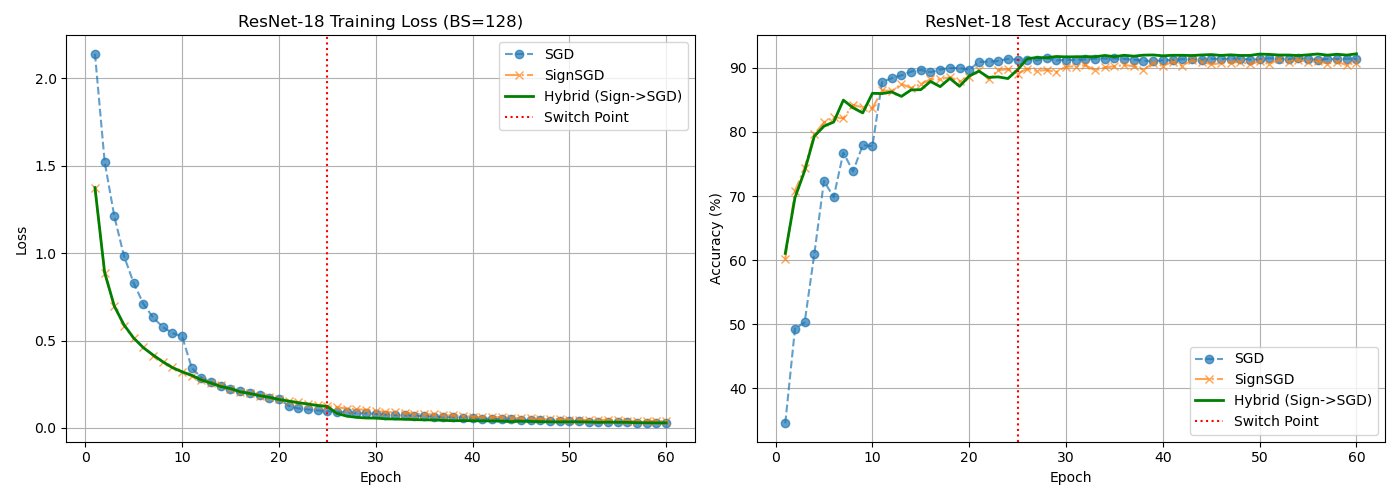}}
\caption{ResNet-18 on CIFAR-10. The hybrid method (green) follows the SignSGD-M trajectory through epoch 25 (red dashed line), then transitions to SGD with the projection-calibrated learning rate from~\eqref{eq:lambda}. Final test accuracy: hybrid 92.18\%, SGD 91.38\%, SignSGD-M 90.82\%.}
\label{fig:cifar10}
\end{figure}

\begin{figure}[t]
\centerline{\includegraphics[width=\columnwidth]{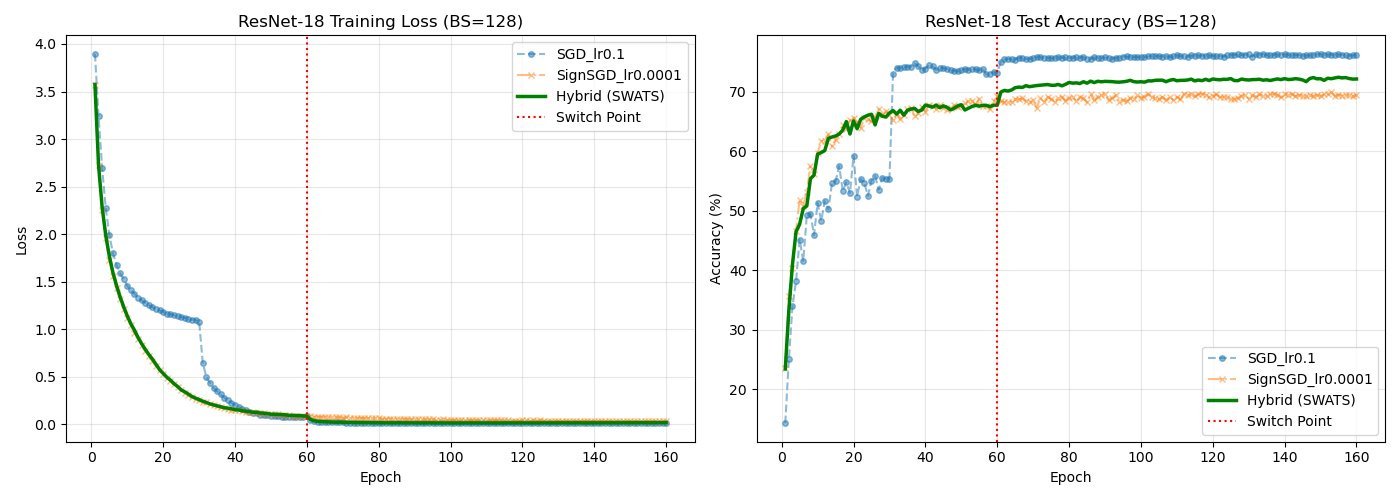}}
\caption{ResNet-18 on CIFAR-100. The hybrid method substantially mitigates the late-stage stagnation of SignSGD-M ($\sim$69\%), reaching $\sim$72\%, but does not match well-tuned SGD ($\sim$77\%) on this benchmark. Switch at epoch 60.}
\label{fig:cifar100}
\end{figure}

\textbf{Empirical results.} On CIFAR-10 (Fig.~\ref{fig:cifar10}), the hybrid method enjoys the rapid early progress of SignSGD-M---crossing 80\% test accuracy several epochs earlier than pure SGD---and after the switch fine-tunes smoothly to 92.18\%, outperforming both pure SGD (91.38\%) and pure SignSGD-M (90.82\%). On CIFAR-100 (Fig.~\ref{fig:cifar100}), the hybrid method substantially reduces the stagnation gap left by SignSGD-M ($\sim$69\% $\to$ $\sim$72\%), but does not close the gap to well-tuned SGD ($\sim$77\%); we attribute this to the larger generalization sensitivity of CIFAR-100 to the precise late-training trajectory and to the heuristic choice of $T_{\mathrm{switch}}$. The CIFAR-100 result indicates that calibrated switching is best understood as a remedy for sign-induced stagnation rather than a uniform improvement over SGD.

\textbf{Practical considerations.} The projection rule~\eqref{eq:lambda} requires only inner products and norms of quantities that SignSGD-M already computes, so the EMA tracking adds negligible cost. The dithering scheme of Sec.~\ref{sec:dithering} can be combined with the switching scheme of Sec.~\ref{sec:switching}: dithering operates inside the SignSGD-M phase and does not modify the projection rule. We did not observe instability from this combination in preliminary experiments, although a systematic study is beyond the scope of this paper.

\section{Results Summary and Discussion}

Table~\ref{tab:summary} consolidates the final test accuracies on ResNet-18 across all settings considered. Three observations stand out. First, plain SignSGD-M trails well-tuned SGD by roughly 0.6\% on CIFAR-10 and by approximately 12\% on CIFAR-100, confirming that the generalization cost of 1-bit quantization grows with task difficulty. Second, pre-sign dithering is the only one of our two improvements that closes most of the CIFAR-100 gap to SGD without changing the optimizer family, while post-sign injection provides at most a marginal benefit over the clean baseline and destabilizes training at higher noise levels---a clear empirical separation between dithering before and after the quantizer. Third, the calibrated switch is most effective on CIFAR-10, where the late-training landscape is benign enough that a short SignSGD-M warm-up followed by SGD beats either pure method; on CIFAR-100 the switch removes SignSGD-M's stagnation but does not match SGD, suggesting that the projection rule alone cannot recover the implicit regularization that SGD enjoys throughout training. Combining dithering during the SignSGD-M phase with the calibrated switch is a natural next step that we did not exhaustively explore here.

\begin{table}[t]
\caption{Final test accuracy (\%) on ResNet-18, single-worker training. Dithering rows (Sec.~\ref{sec:dithering}) and hybrid row (Sec.~\ref{sec:switching}) come from independent runs with matched architectures but separately tuned learning-rate schedules; CIFAR-100 sign-based numbers are read from Fig.~\ref{fig:noise} and rounded to the nearest integer. Best per column in \textbf{bold}.}
\label{tab:summary}
\centering
\begin{tabular}{lcc}
\toprule
Method & CIFAR-10 & CIFAR-100 \\
\midrule
SGD (well-tuned) & 91.38 & \textbf{77} \\
Adam & -- & 67 \\
SignSGD-M (clean) & 90.82 & 65 \\
SignSGD-M + post-noise ($\alpha{=}0.1$) & -- & 66 \\
SignSGD-M + pre-noise ($\alpha{=}0.5$) & -- & 73 \\
SignSGD-M + pre-noise ($\alpha{=}0.1$) & -- & 76 \\
Hybrid (SignSGD-M $\to$ SGD) & \textbf{92.18} & 72 \\
\bottomrule
\end{tabular}
\end{table}

\section{Conclusion}

We revisited SignSGD through a 1-bit-quantization-and-dithering lens and contributed (i) a small-batch SNR-weighted convergence rate under unimodal symmetric noise, removing the large-batch assumption of prior analyses; (ii) annealed pre-sign Gaussian dithering, the optimizer-side analogue of classical dither, which restores magnitude information lost to hard thresholding; and (iii) a SWATS-style calibrated switch from sign-based updates to SGD via a momentum-aware projection rule. Single-worker experiments on ResNet-18 isolate optimizer effects from communication and show that pre-sign dithering surpasses Adam on CIFAR-100, while the calibrated switch outperforms both pure SGD and SignSGD-M on CIFAR-10 and mitigates SignSGD-M stagnation on CIFAR-100 without fully closing the gap to SGD. Multi-worker experiments under majority-vote aggregation, error-feedback combinations, and an automatic switching trigger calibrated by the EMA stability of $\lambda_k$ are natural next steps.

\end{document}